\pgfplotsset{compat=1.17}
\title{Cliff-Learning}
\author{Tony T. Wang\textsuperscript{1} \quad Igor Zablotchi\textsuperscript{1}\textsuperscript{2} \quad Nir Shavit\textsuperscript{1}\textsuperscript{3} \quad Jonathan S. Rosenfeld\textsuperscript{1}\\
\vspace{-8pt}\\
\texttt{\{ttw,shanir,jonsr\}@csail.mit.edu} \quad \texttt{igor.zablotchi@gmail.com} \\
\vspace{-8pt}\\
\textsuperscript{1}MIT \quad \textsuperscript{2}Mysten Labs \quad \textsuperscript{3}Neural Magic
\vspace{-10pt}
}
\newif\ifisarxiv
\newcommand\thefontsize[1]{{#1 The current font size is: \f@size pt\par}}
\newtheorem{proposition}{Proposition}[section]
\numberwithin{equation}{section}
\numberwithin{figure}{section}
\DeclareMathOperator*{\argmin}{arg\,min}
\begin{document}

\vspace*{0.05in}

\maketitle
\ifisarxiv
    \lhead{Preprint}
\fi

\begin{abstract}
    We study the data-scaling of transfer learning from foundation models in the low-downstream-data regime.
    We observe an intriguing phenomenon which we call \textit{cliff-learning}.
    Cliff-learning refers to regions of data-scaling laws where performance improves at a faster than power law rate (i.e. regions of concavity on a log-log scaling plot). We conduct an in-depth investigation of foundation-model cliff-learning and study toy models of the phenomenon. We observe that the degree of cliff-learning reflects the degree of compatibility between the priors of a learning algorithm and the task being learned.
\end{abstract}

\section{Introduction}

The use of pre-trained models as a foundation~\citep{bommasani2021opportunities} for learning has resulted in groundbreaking advances in AI. For example, ChatGPT is a finetuned version of GPT-3~\citep{brown2020language}, and Stable Diffusion~\citep{rombach2022high} is built on top of CLIP-type models~\citep{radford2021learning}. In addition to enabling new capabilities, transfer learning on top of foundation models enables deep-learning to function in scenarios where training data is limited.

One way to evaluate the effectiveness of transfer-learning is to look at its data-scaling behavior: how fast does test error decay when we increase the amount of (downstream) training data? \citet{hernandez2021scaling} finds that when the number of training samples $n$ is large (\Cref{app:related}), transfer-learning test error decays as a power law in $n$:
\begin{equation}
    \label{eqn:power-law-defn}
    \underset{
        \text{$\mathsf{D}$ an i.i.d. dataset of size $n$}
    }{
        \mathbb{E}
    }[
        \text{test-error}(
            \text{model trained on $\mathsf{D}$}
        )
    ] = A \times n^{-\alpha} + E.
\end{equation}
This corroborates a large body of work that finds such power laws are very accurate models of data-scaling for neural networks in the large-$n$ regime~\citep{seung1992statistical,hestness2017deep,rosenfeld2019constructive,kaplan2020scaling,hoffmann2022training}.

Yet, little is known about the low-data regime.
We discover that power laws are not an accurate model of transfer learning data-scaling in the low-data regime; instead, performance often improves at a faster than power law rate. On a log-log plot, this shows up as a concave scaling-law (\Cref{app:concavity}). We dub such regions of concavity \textit{cliff-learning} regions. These regions are arguably the most interesting parts of the data-scaling domain, since they have the highest ``return on data''.

In~\Cref{sec:transfer-learning}, we investigate cliff-learning with foundation models. In~\Cref{sec:toy-models}, we develop toy models for cliff-learning. We discuss implications and open questions in~\Cref{sec:conclusion}.

\section{Foundation model transfer learning}
\label{sec:transfer-learning}

\Cref{fig:data-pruning-comparison-cifar10} shows a prototypical example of transfer learning cliff: finetuning a LAION-CLIP~\citep{schuhmann2022laion} linear probe on CIFAR-10 produces an initial cliff-learning phase that lasts until the model achieves human parity. This is $1000\times$ more efficient than learning from scratch.\footnote{In \Cref{sec:cliff-failures}, we show that linear probe finetuning does not always result in cliff-learning; sometimes, full model finetuning is needed.}

\Cref{fig:data-pruning-comparison-cifar10} also compares finetuning to data-pruning, which was explored as a method for beating power-law scaling in~\citet{sorscher2022beyond}. We find that data-pruning alone does not perform meaningfully better than training from scratch. Data-pruning combined with transfer learning demonstrates stronger signatures of cliff-learning\footnote{For more commentary on data-pruning, see \Cref{app:related}.}, but is outperformed by a linear probe on top of a stronger foundation model.

In \Cref{fig:transfer-deep-dive}, we compare different methods of finetuning for CLIP models of varying quality. The asymptotic behavior of different finetuning methods can be grouped into a few different categories: linear-probes plateau, full-model finetuning shares the asymptotics of from-scratch training, and deeper fully-connected probes share the same asymptotics as $k$-NN probes. However, these asymptotics are only similar in slope; their intercepts are different due differences in transient cliff-learning dynamics. In other words, the quality of a foundation model primarily influences data-scaling in the regime of cliff-learning, higher quality foundation models cliffing more aggressively, and all models share the same asymptotic behaviors in the large-$n$ regime. We conjecture that shared asymptotics arise due to shared mechanisms of learning. 

\begin{figure}[!b]
    \vspace{-5mm}
    \centering
    \import{res/transfer}{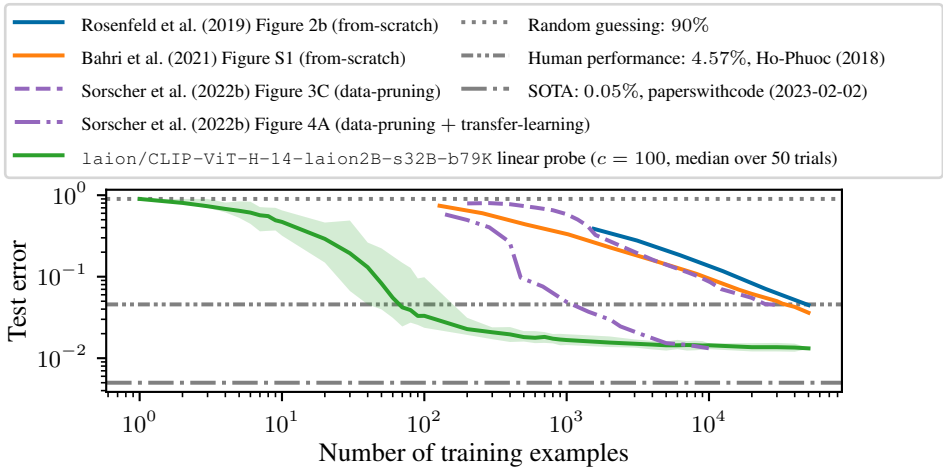}
    \vspace{-3mm}
    \caption{
        Data-scaling of transfer learning, data-pruning, and from-scratch learning on CIFAR-10. The shaded region represents the full range of measured values across all trials. For \citet{sorscher2022beyond}, Pareto-optimal curves across different pruning fractions are plotted.
    }
    \label{fig:data-pruning-comparison-cifar10}
\end{figure}

\begin{figure}[!b]
    \centering
    \import{res/transfer}{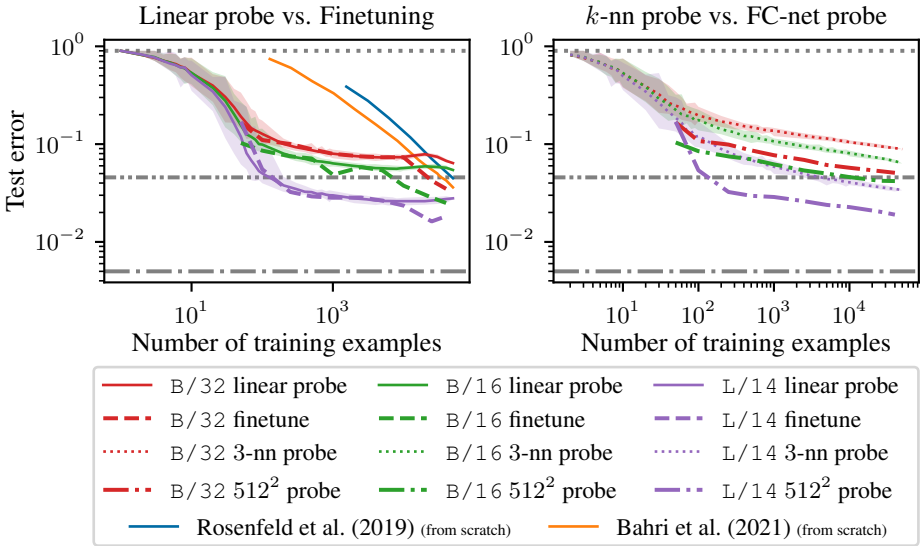}
    \vspace{-3mm}
    \caption{
        Data-scaling of different finetuning methods on CIFAR-10 for three OpenAI CLIP models of increasing quality: $\texttt{B/32} < \texttt{B/16} < \texttt{L/14}$. %
        The method ``finetune'' consisted of finetuning all models parameters after the final readout layer was initialized to the optimal linear probe. The $512^2$ probe was a two layer fully connected ReLU probe of width $512$. Shaded regions are for 50 trials, with medians marked. See \Cref{fig:svhn-deep-dive,fig:all-the-linear-probes} for related results.
    }
    \label{fig:transfer-deep-dive}
\end{figure}

\section{Toy models of cliff-learning}
\label{sec:toy-models}
We now study some toy-models of cliff-learning. Our toy-models have a unifying principle: the more the priors of a learning algorithm match the task being learned, the more cliff-learning occurs.

\subsection{Cliff-learning in linear regression}
\label{sec:linreg}

Say we want to learn a linear function $f(x) = v^\top x$ in $d$-dimensions (i.e. $v, x \in \mathbb{R}^d$) from training data $\{(x_i, y_i)\}_{i = 1}^n$, where the $y_i = f(x_i) + \mathcal{N}(0, \sigma^2)$ are noisy observations.

When $n < d$, there are infinitely many linear functions that perfectly fit the training data.
In this regime, it is hopeless to achieve small error.
However when $n \geq d$ and $\sigma = 0$, we can perfectly\footnote{
    Also assuming the $x_i$ are linearly independent, which is almost surely true if for example the $x_i$ are sampled i.i.d. from $\mathcal{N}(0, I_d)$.    
} learn $f$ via the following least-squares algorithm:
\begin{equation}
    \label{eqn:linreg-lstsq}
    \hat{f}(x) = \hat{v}^\top x,
    \quad
    \hat{v} = \argmin_{w}\,
    \sum_{i = 1}^n \left(w^\top x_i - y_i\right)^2.
\end{equation}

This least-squares estimator demonstrates cliff-learning -- when the training dataset is at least of size $d$ (the critical threshold), test-error instantly drops to zero. This cliff is shown in Figure~\ref{fig:linreg-noiseless}, where it is contrasted against the power law scaling of a nearest-neighbor estimator. The reason our least-squares estimator demonstrates cliff-learning while a nearest-neighbor estimator does not is because our least-squares estimator forces the learnt estimator to be linear, a prior that matches the task being learned.

When $\sigma > 0$, the least-squares estimator in \Cref{eqn:linreg-lstsq} can suffer from double descent, but adding a regularization term $\lambda \|v\|_2^2$ to its objective recovers a softer-form of cliff-learning (Figure~\ref{fig:linreg-noised}).

\begin{figure}[t]
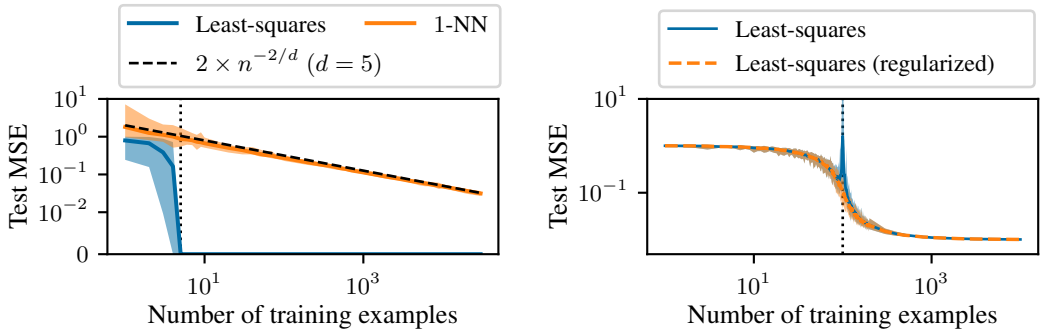

    \vspace{-4mm}
    \centering
    \begin{subfigure}{0.48\textwidth}
        \centering
        \input{res/linreg/linreg-noiseless.pgf}
    	\caption{Data-scaling curves on a noiseless linear regression task with $d = 5$ and $\|v\|_2 = 1$. The $y$-axis is on a linear scale below $10^{-2}$.}
        \label{fig:linreg-noiseless}
    \end{subfigure}
    \hfill
    \begin{subfigure}{0.48\textwidth}
        \centering
        \input{res/linreg/linreg-noised.pgf}
        \caption{Data-scaling curves on a noisy linear regression task with $d = 100$, $\|v\|_2=1$, and $\sigma = 0.1$. The regularization strength is $\lambda = d \sigma^2 / \|v\|_2^2 = 1$.}
        \label{fig:linreg-noised}
    \end{subfigure}
    \vspace{-1mm}
    \caption{Cliffs arise around the critical threshold $n = d$ (dotted vertical lines) for the linear regression toy problem from \Cref{sec:linreg}. We sample $x_i \overset{\text{i.i.d.}}{\sim} \mathcal{N}(0, I_d)$. 50 random trials are visualized, with shading denoting range and solid lines denoting medians. \textbf{(a)}~On noiseless linear regression, a least-squares estimator exhibits a sharp cliff at $d$ samples, while a nearest neighbor estimator exhibits power law scaling with exponent $-2/d$ (as suggested by theory, c.f. \citet{gyorfi2002distribution} Theorem 6.2). \textbf{(b)} In the noisy setting, regularized least-squares exhibits a soft cliff around $d$ samples, and regularization is needed to avoid double-descent. We set the regularization strength $\lambda$ to the ``optimal'' value c.f. \citet{nakkiran2020optimal}.}
    \label{fig:linreg}
\end{figure}

\subsection{Cliff-learning in binary Gaussian classification}
In \Cref{sec:gtm}, we study a binary Gaussian classification problem that exhibits cliff-learning. We are able to work out the exact functional form of cliff-learning in this case (\Cref{eqn:gtm-err-approx-best}), and the resulting cliff-learning curves (\Cref{fig:gtm-main}) look very similar to \Cref{fig:linreg-noised}.

\subsection{Cliff-learning and regularization}
\label{sec:nn-harmonic-cliff}

In this section, we train neural networks to learn 2D bandlimited periodic functions of the form
\begin{equation}
    \label{eqn:harmonic-fn-form}
    h(x) =
    \sum_{a, b \in \{-B, \ldots, B\}}
    \alpha_{ij} \cos(2\pi (ax_1 + bx_2))
  + \beta_{ij} \sin(2\pi (ax_1 + bx_2)),
\end{equation}
where $h : [0, 1]^2 \to \mathbb{R}$.
We train our networks to minimize the mean-squared error on datasets $\{(x_i, h(x_i))\}_{i=1}^n$ where the $x_i$'s are sampled i.i.d. from $\mathrm{Unif}([0, 1]^2)$.

When trained to minimize MSE loss, our networks obey power-law loss scaling. However, in principle, sampling theory tells us that we can reconstruct $h$ perfectly from just $(2B + 1)^2$ distinct samples. We show in \Cref{fig:nn-harmonic-cliff} it is possible to make neural networks cliff have a soft-cliff at this threshold by regularizing their high-frequency components (see \Cref{app:mcls-reg}).

Regularization can be considered the dual form of a prior. The fact that we get cliff-learning with regularization, and power-law scaling without regularization, supports our view that cliff-learning arises from prior-task compatibility.

\begin{figure}
    \vspace{-6mm}
    \centering
    \import{res/harmonics/}{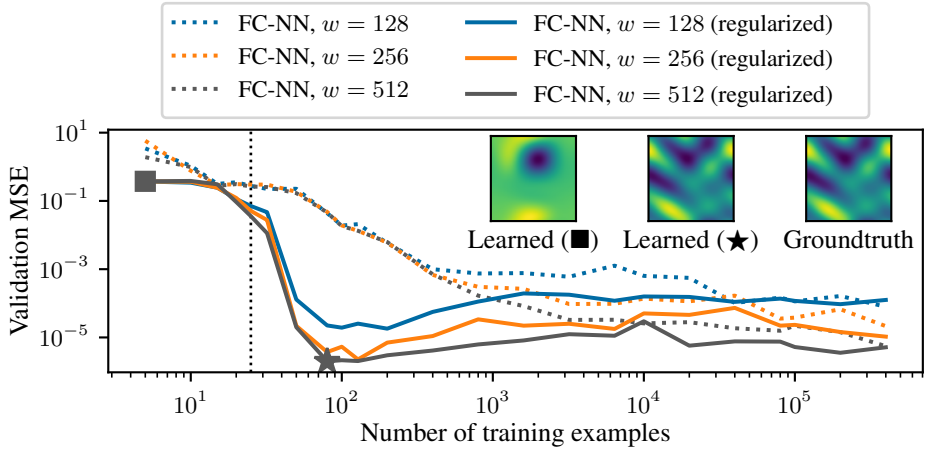}
    \vspace{-3mm}
    \caption{Data-scaling curves for a fully connected ReLU network with 3 hidden layers trained using Adam to learn a 2D harmonic function from \Cref{sec:nn-harmonic-cliff} with bandlimit $B=2$ (visualized in the top right). Without regularization, the data-scaling laws follow a power law trend, plateauing at a level governed by network size. With bandwidth regularization, we get cliff-learning around the theoretical limit of $n = (2B + 1)^2$ (vertical dotted line). Why our method has slightly non-monotonic scaling after the initial cliff is an open question.
    }
    \label{fig:nn-harmonic-cliff}
    \vspace{-3mm}
\end{figure}

\section{Discussion and open questions}
\label{sec:conclusion}

Our toy models in \Cref{sec:toy-models} suggest stronger cliffs arise from better prior-task compatibility. This corroborates our findings in \Cref{sec:transfer-learning} -- we find that the quality of a foundation model (aka the quality of a learned prior) primarily manifests in data-scaling via the initial cliff-learning regime. In the large-$n$ regime, foundation models of different qualities all scale as power laws with similar $\alpha$s c.f. \Cref{eqn:power-law-defn}, but different quality models have different $A$s and $Es$ due to different amounts of cliff-learning.

Finally, we present some open questions:
\begin{enumerate}
    \item
    In \Cref{sec:toy-models}, we conjectured that when two different types of training methods share the same data-scaling asymptotics, they share similar ``mechanisms'' of learning. For example, if a learning algorithm $\mathcal{A}$ has the same asymptotics as a nearest-neighbors method, does this imply $\mathcal{A}$ is ``memorizing'' the training data?  What other ``mechanisms'' are there?

    \item
    We showed that cliff-learning can be achieved by finetuning foundation models. However, these large foundation models generally follow power law scaling in compute, parameters, and data. Is it possible for large foundation models to exhibit strong cliff-learning when trained from scratch? Or is there some sort of conservation principle that says a large amount of data is required somewhere in the learning pipeline?
\end{enumerate}

\subsubsection*{Acknowledgments}
\addcontentsline{toc}{section}{Acknowledgments}
We thank Eric Michaud and Linghao Kong for helpful discussions.

TTW was supported by funding from the Eric and Wendy Schmidt Center at the Broad Institute of MIT and Harvard.

\section*{References}
\addcontentsline{toc}{section}{References}
\begingroup
    \renewcommand{\section}[2]{}%
    \bibliography{references}
    \bibliographystyle{mefomo_2023}
\endgroup

\newpage

\appendix
\section*{Appendix}
\section{Related work}
\label{app:related}

\citet{hernandez2021scaling} studies scaling laws for transfer learning and concludes that power law data-scaling holds for transfer learning. However the experiments in \citet{hernandez2021scaling} are conducted exclusively in the large-$n$ regime (even the smallest experiments use $>3 \times 10^5$ characters of downstream text data). In our experiments, we observed that cliff-learning is a transient phenomenon and that data-scaling reverts to a power law in the large-$n$ regime. We believe the absence of cliff-like behavior in \citet{hernandez2021scaling} is caused by working in a large-$n$ regime where cliff-learning does not apply.

\citet{arora2019theoretical} proposes a theoretical model that explains why contrastive-learning produces representations that support well-performing linear probes. However, \citet{arora2019theoretical} provides limited analysis of the data-efficiency of learning such probes, and their theoretical model is also not fully applicable to CLIP-style models.

\citet{sorscher2022neural} studies the data-scaling of linear probes for foundation models and macaque inferotemporal cortex representations.

While previous work~\citep{hestness2017deep,rosenfeld2019constructive} has also found that power laws do not hold in the low-data regime, our work is the first to examine the transfer learning setting in this regime, and is the first to identify the cliff-learning phenomenon.

\citet{sorscher2022beyond} proposes data pruning as a method for beating power law scaling. However, while data-pruning reduces the amount of data that is trained on, it still requires a large corpus of data on which pruning is performed. Thus data-pruning improves compute scaling but does not reduce the net amount of data required to reach a given loss.\footnote{For example, data pruning on its own can not address the issue of large language models requiring more data than is easily obtainable from the internet, c.f. \citet{nostalgebraist2022implications}.} Moreover, the empirical performance of the data-pruning techniques demonstrated \citet{sorscher2022beyond} are greatly outperformed by the cliff-learning we observe for foundational model transfer learning (\Cref{fig:data-pruning-comparison-cifar10}). Nonetheless, the ideas in \citet{sorscher2022beyond} could plausibly be extended into the active-learning or reinforcement learning setting, where intelligent selection of data / environment exploration could yield true improvements in data-scaling.

Finally \citet{viering2021shape} surveys prior work on cliff-learning\footnote{\citet{viering2021shape} refers to cliff-learning as a ``phase-transition'' in data-scaling.} for both humans and machines. However, none of the surveyed works demonstrate cliff-learning in the context of deep learning. Compared to \citet{viering2021shape} our main contribution is to provide a unifying framework of prior/task compatibility to explain cliff-learning, and to demonstrate that it can be achieved with deep-neural networks in practical scenarios via transfer learning on top of foundation models \citep{bommasani2021opportunities}.

\clearpage
\section{Concavity on a log-log plot}
\label{app:concavity}

We provide a short proof that any scaling law of the form $f(n) = A \times n^{-\alpha} + E$ is non-concave when plotted on log-log scale. Thus, any scaling law which is concave must  (locally) scale faster than any power law.

\begin{proposition}
    For $A, \alpha, E \geq 0$,
    a power law of the form $f(n) = A \times n^{-\alpha} + E$ is non-concave when plotted on a log-log scale.
\end{proposition}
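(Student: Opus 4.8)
The plan is to reduce the claim to a one-line convexity computation after a change of variables. ``Plotted on a log-log scale'' means we plot $y = \log f(n)$ against $u = \log n$, so I would set $u = \log n$ (equivalently $n = e^u$) and study the function
\[
    g(u) := \log f(e^u) = \log\!\left(A e^{-\alpha u} + E\right).
\]
Since $f$ is smooth and strictly positive on $n > 0$ (the argument $A e^{-\alpha u} + E$ stays positive as long as $A > 0$ or $E > 0$), $g$ is twice differentiable, and ``non-concave on a log-log plot'' is exactly the statement $g''(u) \geq 0$ for all $u$. So the whole proposition is: $g$ is convex.

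First I would introduce the shorthand $\phi(u) = A e^{-\alpha u} + E$ and $w = A e^{-\alpha u} \geq 0$, so that $\phi = w + E$, $\phi' = -\alpha w$, and $\phi'' = \alpha^2 w$. Then I would differentiate $g = \log \phi$ twice using the quotient rule:
\[
    g''(u) = \frac{\phi''(u)\,\phi(u) - \phi'(u)^2}{\phi(u)^2}.
\]
Substituting the expressions for $\phi, \phi', \phi''$, the numerator collapses:
\[
    \phi''\phi - (\phi')^2 = \alpha^2 w\,(w + E) - \alpha^2 w^2 = \alpha^2\, w\, E.
\]
Because $\alpha, A, E \geq 0$ (hence $w \geq 0$), every factor is nonnegative, so $g''(u) = \alpha^2 w E / \phi^2 \geq 0$, which is precisely convexity of $g$ and therefore non-concavity of the log-log scaling curve.

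The ``hard part'' here is genuinely just bookkeeping rather than any real obstacle, so the main thing I would be careful about is the degenerate cases, which I would dispatch in a sentence: if $A = 0$ or $\alpha = 0$ then $f$ is constant and $g$ is an affine (hence convex) function of $u$; if $E = 0$ then $g(u) = \log A - \alpha u$ is exactly linear in $u = \log n$, recovering the familiar fact that a pure power law is a straight line on a log-log plot and confirming $g'' = 0$ there. In all cases $g'' \geq 0$. I would close by remarking on the interpretation already flagged in the excerpt: equality $g'' = 0$ holds only in the pure power-law case ($E = 0$) or the trivial constant cases, so any \emph{strict} concavity (a genuine cliff) cannot be produced by a function of the form $A n^{-\alpha} + E$, which is the point the proposition is meant to establish.
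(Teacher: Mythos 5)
Your proposal is correct and follows essentially the same route as the paper: both compute the second derivative of $\log f(e^u)$ directly and observe that the numerator reduces to the nonnegative quantity $\alpha^2 A E e^{-\alpha u}$ (your expression $\alpha^2 w E/\phi^2$ equals the paper's $\alpha^2 A E e^{\alpha u}/(A+Ee^{\alpha u})^2$ after multiplying through by $e^{2\alpha u}$), with the degenerate constant cases dispatched separately just as the paper does.
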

\begin{proof}
    If $A$ or $\alpha$ are zero, $f(n)$ is a constant and non-concave.
    Otherwise,
    \begin{equation*}
        \frac{\partial^2}{\partial x^2} \log(f(e^x))
        = \log (A \times (e^x)^{-\alpha} + E)
        = \frac{
            \alpha^2 A E e^{\alpha x}
        }{
            \left(A + E e^{\alpha x}\right)^2
        } \geq 0.
        \qedhere
    \end{equation*}
\end{proof}

\clearpage
\section{Experimental details}

\begin{itemize}
    \item 
    Linear probes were trained using scikit-learn~\citep{scikit-learn} with $C=100$ as the regularization parameter (\Cref{fig:data-pruning-comparison-cifar10,fig:transfer-deep-dive,fig:cliff-failures}).

    \item
    With the exception of linear probes,
    we used the Adam optimizer~\citep{kingma2014adam} to train / finetune neural networks.

    \item
    For full model finetuning, we used a validation set for early stopping, which strictly speaking should be counted as part of the training dataset. We choose to omit the validation set from the training dataset size since we believe it is possible to find a smarter mechanism for model selection / early stopping that does not require a large validation set.

    On CIFAR-10 and SVHN (\Cref{fig:transfer-deep-dive,fig:cliff-failures}), the validation set was of size $10^4$ and this set was used both for early stopping and for checkpoint selection (the checkpoint with the lowest validation loss was selected).

    On the harmonic function learning task (\Cref{fig:nn-harmonic-cliff}), a validation set of size $512$ was used for early-stopping, and the final checkpoint right before stopping was selected for final evaluation. The same validation set was also used as the test set.

    \item
    Curves for the cited papers are extracted from original publication figures via \href{https://github.com/ankitrohatgi/WebPlotDigitizer}{WebPlotDigitizer} (\Cref{fig:data-pruning-comparison-cifar10,fig:transfer-deep-dive,fig:cliff-failures}).

    \item
    Data-scaling experimental data was tracked using Weights \& Biases~\citep{wandb}.
\end{itemize}

\clearpage
\section{More foundation model transfer learning results}

\subsection{Foundation models do not always yield cliff-learning}
\label{sec:cliff-failures}
We show in \Cref{fig:cliff-failures,fig:svhn-deep-dive} that foundation-model transfer learning does not always yield substantial cliff-learning.

\subsection{More linear probe data-scaling}
We give plot more data-scaling laws for foundation model linear probes in \Cref{fig:all-the-linear-probes}.

\begin{figure}[h]
    \vspace{2cm}
    \centering
    \import{res/transfer}{cliff-failures.pgf}
    \vspace{-3mm}
    \caption{
        Data-scaling laws for deep neural networks on two standard image classification benchmarks. Some but not all of the laws exhibit cliff-scaling. Shading captures the range of measured values across 50 different trials, and solid lines denote the median over trials. Humans achieve 4.57\% error on CIFAR-10~\citep{ho2018cifar10} and 2\% error on SVHN~\citep{netzer2011reading}.
        \textbf{Left:}
        A \texttt{ViT} CLIP model has a large region of cliff-learning, spanning from random-guessing to human parity. However, foundation model finetuning does not always results in a cliff, as demonstrated by a BeIT linear probe~\citep{bao2021beit}.
        \textbf{Right:}
        While a CLIP linear-probe aggressively cliffs on CIFAR10, it behaves much closer to a power-law on SVHN. Finetuning the whole CLIP model yields a curve with similar asymptotics to from-scratch learning, though we can infer the finetuning curve must cliff more in the small-$n$ regime (since error cannot exceed 100\%).
    }
    \label{fig:cliff-failures}
    \vspace{-3mm}
\end{figure}

\begin{figure}
    \centering
    \import{res/transfer}{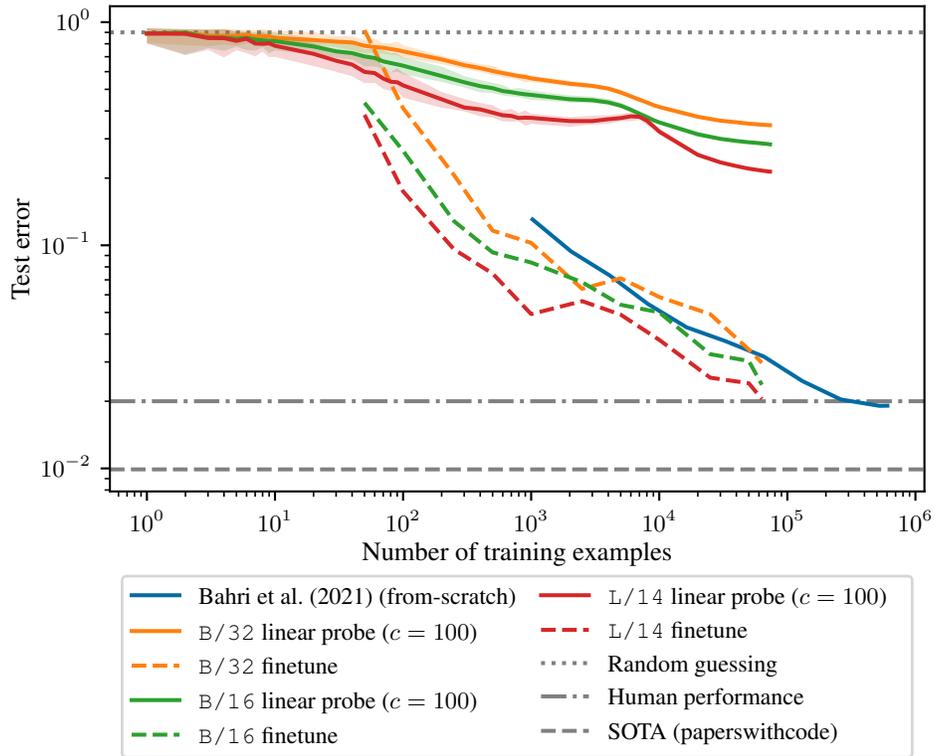}
    \vspace{-3mm}
    \caption{
        Data-scaling of different finetuning methods on SVHN for three OpenAI CLIP models of increasing quality: $\texttt{B/32} < \texttt{B/16} < \texttt{L/14}$.
    }
    \label{fig:svhn-deep-dive}
\end{figure}

\begin{figure}
    \centering
    \import{res/transfer}{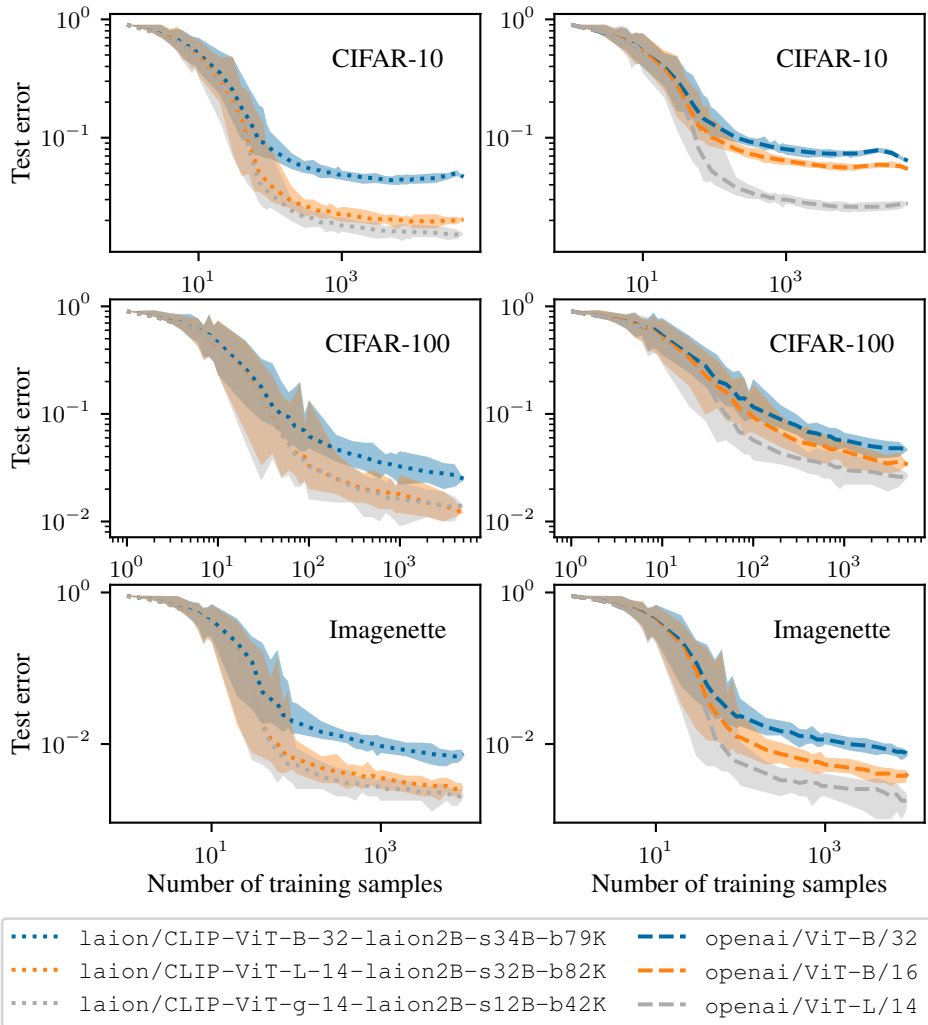}
    \vspace{-3mm}
    \caption{
        Linear probe data-scaling of OpenAI and LAION CLIP models on CIFAR-10, SVHN, and ImageNette~\citep{imagewang}. Models consistently maintain relative ordering across datasets and $n$.
    }
    \label{fig:all-the-linear-probes}
\end{figure}

\clearpage
\section{Monte-Carlo bandwidth regularization}
\label{app:mcls-reg}

In this section, we describe how we can can regularize the high-frequency components of a function $f_\theta:[0,1]^d \to \mathbb{R}$.

If a function $f_\theta$ has bandlimit $B$, this means there exists coefficients $a_v$ and $b_v$ such that
\begin{equation*}
    f_\theta(x) =
\sum_{v \,\in\, \texttt{nneg}([\pm B]^d)} a_v \cos(2 \pi v^\top x)
\quad+ \sum_{v \in \texttt{nneg}([\pm B]^d) - \{\mathbf{0}\}} b_v \sin(2 \pi v^\top x),
\end{equation*}
where $[\pm B] \triangleq \{-B, \ldots, 0, \ldots B\}$ and \texttt{nneg} is defined as

\begin{minipage}{\linewidth}
\begin{minted}[numbersep=5pt,
               framesep=2mm]{python}
    def nneg(vs: list[np.ndarray[d]]) -> list[np.ndarray[d]]:
        def _include(v: np.ndarray[d]) -> bool:
            nonzero_coords = v[v != 0]
            if len(nonzero_coords) == 0:
                return True
            return nonzero_coords[0] > 0
        return [v for v in vs if _include(v)]
\end{minted}
\end{minipage}

Let $S_B$ denote the subspace of real-valued functions over $[0, 1]^d$ that is spanned by the orthonormal basis
\begin{align*}
\Phi_B \triangleq\,
& \{\cos(2\pi v^\top x) \, | \, v \in \texttt{nneg}([\pm B]^d)\} \\
\sqcup\, &\{\sin(2\pi v^\top x) \, | \, v \in \texttt{nneg}([\pm B]^d) - \{\mathbf{0}\}\}.
\end{align*}

Another way of saying $f_\theta$ has bandlimit $B$ is to say that
\begin{equation*}
\min_{g \,\in\, S_B} \;\|f_\theta - g\|^2 = 0,
\end{equation*}
where the norm $\|\cdot\|$ is defined via $\displaystyle \|f\|^2 \triangleq \int_{x \in [0,1]^d} f(x)^2 \, dx$.

\textbf{Main idea:}
In order to regularize $f_\theta$ towards having bandlimit $B$, we will add an approximate version of $\min_{g \,\in\, S_B} \;\|f_\theta - g\|^2$ to our loss. This approximation will be done via Monte-Carlo integration. It consists of the following steps:
\begin{enumerate}
    \item
    Draw uniform i.i.d. samples $x_1, \ldots, x_m$ from $[0, 1]^d$.

    \item
    Compute the vector
    \begin{equation*}
        \mathbf{y} = \begin{bmatrix}
        f_\theta(x_1) \\
        \vdots \\
        f_\theta(x_m)
        \end{bmatrix}.
    \end{equation*}

    \item
    For each function $\phi$ in $\Phi_B$, compute the vector
    \begin{equation*}
        \mathbf{v}_\phi = \begin{bmatrix}
        \phi(x_1) \\
        \vdots \\
        \phi(x_m) \\
        \end{bmatrix}.
    \end{equation*}
    Then combine all the $\mathbf{v_\phi}$ into a matrix $\mathbf{V}_B$ in the following manner:
    \begin{equation*}
        \mathbf{V}_B = \begin{bmatrix}
        | & & | \\
        \mathbf{v}_{\phi_1}
        & \cdots
        & \mathbf{v}_{\phi_{|\Phi_B|}} \\
        | & & |
        \end{bmatrix}.
    \end{equation*}
    The matrix $\mathbf{V}_B$ has $m$ rows and $|\Phi_B|$ columns.

    \item
    Finally compute 
    \begin{equation*}
        \frac{1}{m} \cdot \min_{\mathbf{z}} \|\mathbf{V}_B\, \mathbf{z} - \mathbf{y} \|_2^2 \approx \min_{g \,\in\, S_B} \;\|f_\theta - g\|^2.
    \end{equation*}
    The left hand side is equivalent to the quantity
    \begin{equation*}
        \frac{1}{m} \cdot \|\mathbf{V}_B\mathbf{V}_B^+ \,\mathbf{y} - \mathbf{y} \|_2^2,
    \end{equation*}
    where $\mathbf{V}^+_B$ is the Moore-Penrose psuedoinverse of $\mathbf{V}_B$.
\end{enumerate}

The regularized optimization in \Cref{sec:nn-harmonic-cliff} was performed on the minibatch loss
\begin{equation*}
    \frac{1}{\texttt{batch\_size}} \sum_{i = 1}^{\texttt{batch\_size}} (f_\theta(x_i) - h(x_i))^2 + \frac{\lambda}{m} \cdot \|\mathbf{V}_B\mathbf{V}_B^+ \,\mathbf{y} - \mathbf{y} \|_2^2.
\end{equation*}
with $\texttt{batch\_size} = 256$, $B = 2$, $m = 2 \times 10^4$, and $\lambda = 1$.

\clearpage
\section{Binary Gaussian classification toy model}
\label{sec:gtm}
In this section, we analyze a simple classification problem which demonstrates cliff-learning.

We assume the underlying data distribution
for $\mathsf{x} \in \mathbb{R}^d$ and $\mathsf{y} \in \{\pm 1\}$
is
\begin{equation*}
    \mathsf{y} \sim \text{Unif}(\{\pm 1\}), \quad
    \mathsf{x} \sim \mathcal{N}(\mathsf{y} \cdot s \cdot e_1, I_d),
\end{equation*}
where $s \geq 0$ is a hyperparameter representing the signal-to-noise ratio.

From this data distribution, we draw $n$ i.i.d.
training datapoints $\{(\mathsf{x}_i, \mathsf{y}_i)\}_{i=1}^n$.
We will use this data to learn a linear classifier to predict
$\mathsf{y}$ from $\mathsf{x}$.
Our linear classifier will take the form
\begin{equation*}
f_{w}(x) = \text{sign}(w^\top x).
\end{equation*}

We will learn our linear classifier using the following learning algorithm:
\begin{equation}
    \label{eqn:gtm-w-def}
    \hat{\mathsf{w}} = \frac{1}{n} \sum_{i=1}^n \mathsf{y}_i \mathsf{x}_i.
\end{equation}

Note that a linear change of coordinates
does not effect the performance of our learning algorithm.
Thus the results of this section apply to the more general case when
$\mathsf{x} \sim \mathcal{N}(\mathsf{y} \cdot \mu, \Sigma)$
for an arbitrary $\mu \in \mathbb{R}^d$ and positive definite $\Sigma$.

The toy model here is approximately a simplified version of the model in \citet{sorscher2022neural} -- we model $\mathsf{x}$ as Gaussian instead of uniform over an ellipsoid. This makes a difference in low dimensions, but in high dimensions the two distributions are extremely close.

\subsection{Analytic error asymptotics}
For a fixed $w$, the test error of $f_w$ can be written as
\begin{align}
\text{Err}(f_w)
&= \mathbb{P}(f_w(\mathsf{x}) \neq \mathsf{y}) \notag \\
&= \mathbb{P}(\text{sign}(w^\top \mathsf{x}) \neq \mathsf{y}) \notag \\
&= \mathbb{P}(w^\top \mathsf{x} \cdot \mathsf{y} < 0) \notag \\
&= \mathbb{P}(w^\top \mathcal{N}(s \cdot e_1, I_d) < 0) \notag \\
&= \mathbb{P}(\mathcal{N}(s \cdot w_1, \,w^\top w) < 0) \notag \\
&= \Phi\left(- \frac{s\cdot w_1}{\|w\|_2}\right), \label{eqn:gtm-err-def}
\end{align}
where $\Phi(x) = \int_{-\infty}^x \frac{1}{\sqrt{2\pi}} e^{-t^2}\, dt$ is the cumulative distribution function of the standard normal distribution.

Plugging in the learning algorithm for $\hat{\mathsf{w}}$ in \Cref{eqn:gtm-w-def}
into \Cref{eqn:gtm-err-def},
we have that
\begin{equation}
    \label{eqn:gtm-arg-def}
    \frac{s \cdot \hat{\mathsf{w}}_1}{\|\hat{\mathsf{w}}\|_2}
    \overset{d}{=}
    \frac{
        \displaystyle s^2 + \frac{s \epsilon}{\sqrt{n}}
    } {
        \displaystyle \sqrt{
            s^2
            + \frac{2s\epsilon}{\sqrt{n}}
            + \frac{\epsilon^2 + \chi^2_{d - 1}}{n}
        }
    },
\end{equation}
where $\epsilon \sim \mathcal{N}(0, 1)$ is a standard normal random variable,
$\chi^2_{d - 1}$ is a Chi-squared random variable with $d - 1$ degrees of freedom,
and $\overset{d}{=}$ denotes equality in distribution.

From \Cref{eqn:gtm-arg-def}, we get that for large $n$,
\begin{align}
    \text{Err}(f_{\hat{\mathsf{w}}})
    &\overset{d}{=}
    \Phi(-s) + \frac{\Phi'(-s)}{2s} \cdot \frac{\chi^2_{d - 1}}{n}
    + O\left(n^{-3/2}\right) \notag \\
    &\overset{d}{=}
    \Phi(-s) + \frac{e^{-s^2 / 2}}{\sqrt{8\pi} s} \cdot \frac{\chi^2_{d - 1}}{n}
    + O\left(n^{-3/2}\right),
    \label{eqn:gtm-approx}
\end{align}
which means with high probability,
$\text{Err}(f_{\hat{\mathsf{w}}})$
decays as a power law in the number of training samples $n$,
with an irreducible error of $\Phi(-s)$.

\subsection{Empirical data-scaling laws}

In \Cref{fig:gtm-main},
we show the data-scaling of $\text{Err}(f_{\hat{\mathsf{w}}})$
for various values of $d$ and $s$.
Emprically, we find that for large $d$,
the scaling curve is extremely well approximated as
\begin{equation}
    \text{Err}(f_{\hat{\mathsf{w}}}) \approx \Phi\left(
    - \frac{s}{\displaystyle \sqrt{1 + \frac{d}{ns^2}}}
    \right).
    \label{eqn:gtm-err-approx-best}
\end{equation}
When $d$ is small,
$\text{Err}(f_{\hat{\mathsf{w}}})$ has much higher variance,
but \Cref{eqn:gtm-err-approx-best} is nonetheless
a good approximation of the the median value of $\text{Err}(f_{\hat{\mathsf{w}}})$.

\begin{figure}
    \centering
    \input{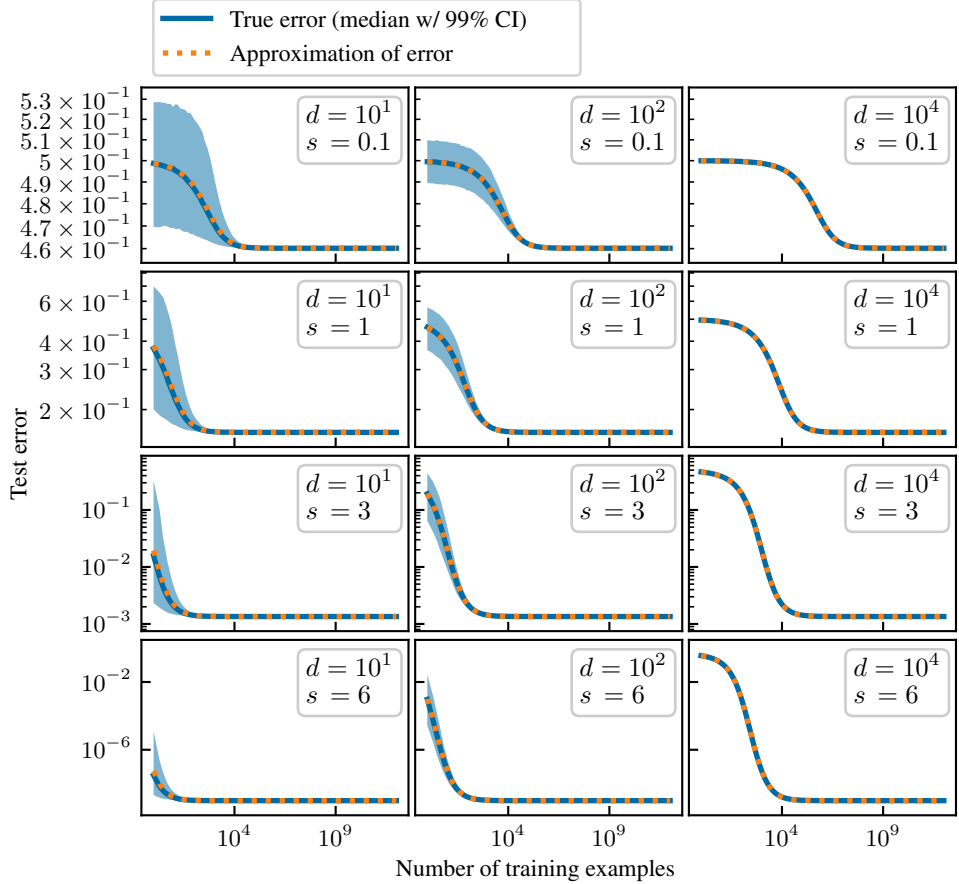}
    \caption{Data scaling laws for the Gaussian toy model defined in \Cref{sec:gtm}, showcasing a characteristic cliff shape. The blue lines denotes the median error of $f_{\hat{\mathsf{w}}}$ over 10000 trials for various values of $d$ and $s$, and shaded blue regions denote the 99\% confidence intervals over those same trials. The orange lines are computed via \Cref{eqn:gtm-err-approx-best}. Increasing $s$ has the effect of decreasing the irreducible error, and increasing $d$ reduces variance and shifts the entire curve to the right.}
    \label{fig:gtm-main}
\end{figure}

\end{document}